\DeclareMathOperator{\softmax}{softmax}
\DeclareMathOperator{\exponential}{exponential}
\newcommand{\ra}[1]{\renewcommand{\arraystretch}{#1}}
\newcommand{\estimator}[1]{\nabla_{#1}}
\newcommand{\REINFORCE}{\estimator{\mathrm{REINF}}}
\newcommand{\GS}{\estimator{\mathrm{GS}}}
\newcommand{\ST}{\estimator{\mathrm{ST}}}
\newcommand{\STGS}{\estimator{\mathrm{STGS}}}
\newcommand{\GR}{\estimator{\mathrm{GR}}}
\newcommand{\GRMC}[1]{\estimator{\mathrm{GRMC}#1}}
\newcommand{\vspacesectiontop}{\vspace{0pt}}
\newcommand{\vspacesectionbottom}{\vspace{0pt}}
\newcommand{\vspacesubsectiontop}{\vspace{0pt}}
\newcommand{\vspacesubsectionbottom}{\vspace{0pt}}
\newcommand{\vspacefiguretop}{\vspace{0pt}}
\newcommand{\vspacesubfiguretop}{\vspace{0pt}}
\newcommand{\vspacesubfigurebottom}{\vspace{0pt}}
\newcommand{\vspacefigcaptiontop}{\vspace{-2pt}}
\newcommand{\vspacefigcaptionbottom}{\vspace{0pt}}
\newcommand{\vspacesubcaptiontop}{\vspace{-18pt}}
\newcommand{\vspacesubcaptionbottom}{\vspace{0pt}}
\newcommand{\vspacetabletop}{\vspace{0pt}}
\newcommand{\vspacetablebottom}{\vspace{0pt}}
\newcommand{\vspacetablecaptiontop}{\vspace{-2pt}}
\newcommand{\vspacetablecaptionbottom}{\vspace{-8pt}}
\title{Rao-Blackwellizing the Straight-Through Gumbel-Softmax Gradient Estimator}
\author{Max B. Paulus \\
ETH Z\"urich \\
\texttt{max.paulus@inf.ethz.ch} \\
\And
Chris J. Maddison\thanks{Work done partly at the Institute for Advanced Study, Princeton, NJ.} \\
University of Toronto, Vector Institute \\
\texttt{cmaddis@cs.toronto.ca}
\And
Andreas Krause \\ 
ETH Z\"urich \\
\texttt{krausea@ethz.ch} \\
}
\begin{document}

\maketitle

\begin{abstract}
Gradient estimation in models with discrete latent variables is a challenging problem, because the simplest unbiased estimators tend to have high variance. To counteract this, modern estimators either introduce bias, rely on multiple function evaluations, or use learned, input-dependent baselines. Thus, there is a need for estimators that require minimal tuning, are computationally cheap, and have low mean squared error. In this paper, we show that the variance of the straight-through variant of the popular Gumbel-Softmax estimator can be reduced through Rao-Blackwellization without increasing the number of function evaluations. This provably reduces the mean squared error. We empirically demonstrate that this leads to variance reduction, faster convergence, and generally improved performance in two unsupervised latent variable models.
\end{abstract}
\vspacesectiontop
\section{Introduction}
\vspacesectionbottom

\looseness -1 Models with discrete latent variables are common in machine learning. Discrete random variables provide an effective way to parameterize multi-modal distributions, and some domains naturally have latent discrete structure (e.g, parse trees in NLP). Thus, discrete latent variable models can be found across a diverse set of tasks, including conditional density estimation, generative text modelling \citep{yang2017improved}, multi-agent reinforcement learning \citep{mordatch2017emergence, lowe2017multi} or conditional computation \citep{bengio2013straightthrough, davis2013low}. 

\looseness -1 The majority of these models are trained to minimize an expected loss using gradient-based optimization, so the problem of gradient estimation for discrete latent variable models has received considerable attention over recent years. Existing estimation techniques can be broadly categorized into two groups, based on whether they require one loss evaluation \citep{glynn1990likelihood, williams1992simple, bengio2013straightthrough, mnih2014neural, chung2017hierarchical, maddison2017concrete, jang2017categorical, grathwohl2018backpropagation} or multiple loss evaluations \citep{gu2015muprop, mnih2016variational, tucker2017rebar} per estimate. These estimators reduce variance by introducing bias or increasing the computational cost with the overall goal being to reduce the total mean squared error.

\looseness -1 Because loss evaluations are  {costly} in the modern deep learning age,  {single evaluation} estimators are particularly desirable. This family of estimators can be further categorized into those that relax the discrete randomness in the forward pass of the model \citep{maddison2017concrete, jang2017categorical} and those that leave the loss computation unmodified \citep{glynn1990likelihood, williams1992simple, bengio2013straightthrough, chung2017hierarchical, mnih2014neural, grathwohl2018backpropagation}. The ones that do not modify the loss computation are preferred, because they avoid the accumulation of errors in the forward direction and they allow the model to exploit the sparsity of discrete computation. Thus, there is a particular need for {single evaluation estimators that do not modify the loss computation}.

In this paper we introduce such a method. In particular, we propose a {Rao-Blackwellization} scheme for the straight-through variant of the Gumbel-Softmax estimator \citep{jang2017categorical, maddison2017concrete}, which comes at a minimal cost, and does not increase the number of function evaluations. The {\em straight-through Gumbel-Softmax estimator}\citep[ST-GS,][]{jang2017categorical} is a lightweight state-of-the-art single-evaluation estimator based on the Gumbel-Max trick  \citep[see][and references therein]{maddison2014astar}. The ST-GS uses the argmax over Gumbel random variables to generate a discrete random outcome in the forward pass. It computes derivatives via backpropagation through a tempered \textit{softmax} of the same Gumbel sample. Our Rao-Blackwellization scheme is based on the key insight that there are {\em many} configurations of Gumbels corresponding to the {\em same} discrete random outcome and that these can be {marginalized} over with Monte Carlo estimation. By design, there is no need to re-evaluate the loss and the additional cost of our estimator is linear only in the number of Gumbels needed for a single forward pass. As we show, the Rao-Blackwell theorem implies that our estimator has lower mean squared error than the vanilla ST-GS. We demonstrate the effectiveness of our estimator in unsupervised parsing on the ListOps dataset \citep{nangia2018listops} and on a variational autoencoder loss \citep{kingma2013vae, rezende2014stochastic}. We find that in practice our estimator trains {\em faster} and achieves {\em better test set performance}. The magnitude of the improvement depends on several factors, but is particularly pronounced at small batch sizes and low temperatures.
%We find that in practice our estimator is effective at {\em lower temperatures}, trains {\em faster}, and achieves {\em better test set performance}. The magnitude of the improvement depends on several factors, but is particularly pronounced at small batch sizes.

\vspacesectiontop
\section{Background}
\vspacesectionbottom

For clarity, we consider the following simplified scenario. Let $D \sim p_{\theta}$ be a discrete random variable $D \in \{0, 1\}^n$ in a one-hot encoding, $\sum D_i = 1$, with distribution given by  $p_{\theta}(D) \propto \exp(D^T \theta)$ where $\theta \in \R^n$. Given a continuously differentiable $f : \R^{2n} \to \R$, we wish to minimize,
\begin{align}
    \label{eq:problem} \min_{\theta} \mathbb{E}[f(D, \theta)],
\end{align}
where the expectation is taken over all of the randomness. In general $\theta$ may be computed with some neural network, so our aim is to derive estimators of the total derivative of the expectation with respect to $\theta$ for use in stochastic gradient descent. This framework covers most simple discrete latent variable models, including variational autoencoders \citep{kingma2013vae, rezende2014stochastic}. 

The {\em REINFORCE estimator} \citep{glynn1990likelihood, williams1992simple} is { unbiased} (under certain smoothness assumptions) and given by:
\begin{align}
\label{eq:def_reinforce}
    \REINFORCE := f(D, \theta) \frac{\partial  \log p_{\theta}(D)}{\partial \theta} + \frac{\partial f(D, \theta)}{\partial \theta} .
\end{align}
\looseness -1 Without careful use of control variates \citep{mnih2014neural, tucker2017rebar, grathwohl2018backpropagation}, the REINFORCE estimator tends to have { prohibitively high variance}. To simplify exposition we assume henceforth that $f(D, \theta) = f(D)$ does not depend on $\theta$, because the dependence of $f(D, \theta)$ on $\theta$ is accounted for in the second term of \eqref{eq:def_reinforce}, which is shared by most estimators and generally has low variance.

One strategy for reducing the variance is to introduce {bias} through a relaxation \citep{jang2017categorical, maddison2017concrete}. Define the tempered softmax $\softmax_{\tau} : \R^n \to \R^n$ by $\softmax_{\tau}(x)_i = \exp(x_i/\tau) / \sum_{j=1}^n \exp(x_j/\tau)$. The relaxations are based on the observation that the sampling of $D$ can be reparameterized using Gumbel random variables and the zero-temperature limit of the tempered softmax under the coupling:
\begin{align}
\label{eq:coupling} D = \lim_{\tau \to 0} S_{\tau}; \qquad S_{\tau} = \softmax_{\tau}(\theta + G)
\end{align}
where $G$ is a vector of i.i.d. $G_i \sim \Gumbel$ random variables. At finite temperatures $S_{\tau}$ is known as a Gumbel-Softmax (GS) \citep{jang2017categorical} or concrete \citep{maddison2017concrete} random variable, and the relaxed loss $\mathbb{E}[f(S_{\tau}, \theta)]$ admits the following reparameterization gradient estimator for $\tau > 0$:\footnote{For a function $f(x_1, x_2)$, $\partial f(z_1, z_2) / \partial x_1$ is the partial derivative (e.g., a gradient vector) of $f$ in the first variable evaluated at $z_1, z_2$. For a function $g(\theta)$, $d g / d \theta$ is the total derivative of $g$ in $\theta$. For example, $ d \softmax_{\tau}(\theta + G)/ d\theta$ is the Jacobian of the tempered softmax evaluated at the random variable $\theta + G$.}
\begin{align}
    \GS := \frac{\partial  f(S_{\tau})}{\partial S_{\tau}} \frac{d  \softmax_{\tau}(\theta + G)}{d \theta}.
\end{align}
This is an unbiased estimator of the gradient of  $\mathbb{E}[f(S_{\tau}, \theta)]$, but a biased estimator of our original problem (\ref{eq:problem}). For this to be well-defined $f$ must be defined on the interior of the simplex (where $S_{\tau}$ sits). This estimator has the advantage that it is easy to implement and generally low-variance, but the disadvantage that it modifies the forward computation of $f$ and is biased.  Henceforth, we assume $D, S_{\tau},$ and $G$ are coupled almost surely through (\ref{eq:coupling}).

Another popular family of estimators are the so-called {\em straight-through estimators} \citep[c.f.,][]{bengio2013straightthrough, chung2017hierarchical}. In this family, the forward computation of $f$ is unchanged, but backpropagation is computed ``through'' a surrogate. One popular variant takes as a surrogate the tempered probabilities of $D$, resulting in the \textit{slope-annealed straight-through estimator (ST)}:
\begin{align}
    \ST := \frac{\partial  f(D)}{\partial D} \frac{d \softmax_{\tau}(\theta)}{d \theta}.
\end{align}
The most popular variant \citep{jang2017categorical} is known as the {\em straight-through Gumbel-Softmax (ST-GS)}. The surrogate for ST-GS is $S_{\tau}$, whose Gumbels are coupled to $D$ through \eqref{eq:coupling}:
\begin{align}
\label{eq:def_stgs}
    \STGS := \frac{\partial  f(D)}{\partial D} \frac{d \softmax_{\tau}(\theta + G)}{d \theta}.
\end{align}
The straight-through family has the advantage that they tend to be low-variance and $f$ need not be defined on the interior of the simplex (although $f$ must be differentiable at the corners). This family has the disadvantage that they are not known to be unbiased estimators of \emph{any} gradient. These estimators are quite popular in practice, because they preserve the forward computation of $f$, which prevents the forward propagation of errors and maintains sparsity \citep{choi2017unsupervised, chung2017hierarchical, bengio2013straightthrough}.

All of the estimators discussed in this paper can be computed by any of the standard automatic differentiation software packages using a single evaluation of $f$ on a realization of $D$ or some underlying randomness. We present implementation details for these and our Gumbel-Rao estimator in the Appendix, emphasizing the surrogate loss framework \citep{schulman2015gradient, weber2019credit} and considering the multiple stochastic layer case not covered by \eqref{eq:problem}.

\vspacesectiontop
\section{Gumbel-Rao Gradient Estimator}
\vspacesectionbottom

\vspacesubsectiontop
\subsection{Rao-Blackwellization of ST-Gumbel-Softmax}
\vspacesubsectionbottom

We now derive our Rao-Blackwelization scheme for the ST-GS estimator. Our approach is based on the observation that there is a {\em many-to-one} relationship between realizations of $\theta + G$ and $D$ in the coupling described by \eqref{eq:coupling} and that the variance introduced by $\theta + G$ can be marginalized out. The resulting estimator, which we call the {\em Gumbel-Rao (GR)} estimator, is guaranteed by the Rao-Blackwell theorem to have lower variance than ST-GS. In the next subsection we turn to the practical question of carrying out this marginalization.

In the Gumbel-max trick \eqref{eq:coupling}, $D$ is a one-hot indicator of the index of $\arg\max_{i} \left\{ \theta_i + G_i\right\}$. Because this argmax operation is non-invertible, there are { many} configurations of $\theta + G$ that correspond to a { single} $D$ outcome. Consider an alternate factorization of the joint distribution of $(\theta + G, D)$: first sample $D \sim p_{\theta}$, and then $\theta + G$ given $D$. In this view, the Gumbels are auxillary random variables, at which the Jacobian of the tempered softmax is evaluated and which locally increase the variance of the estimator. This local variance can be removed by marginalization. This is the key insight of our GR estimator, which is given by,
\begin{align}
\label{eq:def_stgr}
    \GR := \frac{\partial  f(D)}{\partial D} \mathbb{E}\left[\frac{d \softmax_{\tau}(\theta + G)}{d \theta} \middle | D \right].
\end{align}
It is not too difficult to see that $\GR = \expect\left[\STGS | D \right]$. By the tower rule of expectation, GR has the same expected value as ST-GS and is an instance of a {Rao-Blackwell estimator} \citep{blackwell1947conditional, Rao1992information}. Thus, it has the same mean as ST-GS, but a lower variance. Taken together, these facts imply that GR enjoys a lower mean squared error (\emph{not} a lower bias) than ST-GS. 
\begin{prop}
\label{prop:gr_mse}
Let $\STGS$ and $\GR$ be the estimators defined in \eqref{eq:def_stgs} and \eqref{eq:def_stgr}. Let $\nabla_{\theta} := d \mathbb{E}[f(D)]/d\theta$ be the true gradient that we are trying to estimate. We have
\begin{align}
\label{eq:gr_mse}
\expect
\left[
\left\lVert \GR - \nabla_{\theta} \right\rVert^2
\right] 
\leq 
\expect
\left[
\left\lVert \STGS - \nabla_{\theta} \right\rVert^2
\right].
\end{align}
\end{prop}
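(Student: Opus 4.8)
The plan is to reduce the claim to the Rao--Blackwell theorem through a bias--variance decomposition of the mean squared error. For any square-integrable $\R^n$-valued estimator $X$ of the fixed target $\nabla_{\theta}$, I would use the identity
\begin{align}
\expect\!\left[\|X - \nabla_{\theta}\|^2\right] = \|\expect[X] - \nabla_{\theta}\|^2 + \expect\!\left[\|X - \expect[X]\|^2\right],
\end{align}
whose first term is the squared bias and whose second term is the total variance, i.e.\ the trace of the covariance matrix. Applying this to both $X = \STGS$ and $X = \GR$ splits the comparison of mean squared errors into a comparison of bias terms and a comparison of variance terms.

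First I would dispatch the bias. Using the identity $\GR = \expect[\STGS \mid D]$ established above together with the tower rule, $\expect[\GR] = \expect[\expect[\STGS \mid D]] = \expect[\STGS]$, so the two estimators share the same mean and therefore the same squared bias against $\nabla_{\theta}$. This is the sense in which GR does not improve the bias. The inequality \eqref{eq:gr_mse} thus reduces to showing that the total variance of $\GR$ is no larger than that of $\STGS$.

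The core step is the vector-valued law of total variance. Writing $Y := \GR = \expect[\STGS \mid D]$, which is a function of $D$, I would expand the total variance of $\STGS$ around its mean and condition the cross term on $D$:
\begin{align}
\expect\!\left[\|\STGS - \expect[\STGS]\|^2\right] = \expect\!\left[\|\STGS - Y\|^2\right] + \expect\!\left[\|Y - \expect[Y]\|^2\right].
\end{align}
The cross term vanishes because $Y - \expect[Y]$ is $D$-measurable while $\expect[\STGS - Y \mid D] = Y - Y = 0$, so its conditional expectation given $D$ is zero. Since the first term on the right is nonnegative, the total variance of $\GR$ is at most that of $\STGS$, and combining this with the equality of the bias terms yields \eqref{eq:gr_mse}.

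The main obstacle is bookkeeping rather than conceptual. I would need to assume finite second moments so that every expectation above is well defined, and to be careful to work with the scalar trace quantity $\expect[\|X - \expect[X]\|^2]$ --- the form that actually appears in the MSE --- rather than the full covariance matrix, so that the cross-term cancellation reduces to a single conditional-expectation computation. Once $\GR = \expect[\STGS \mid D]$ is in hand, this cancellation, which is the heart of the Rao--Blackwell argument, follows immediately from the $D$-measurability of $Y$, and the proposition follows.
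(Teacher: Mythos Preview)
Your argument is correct but takes a different route from the paper. The paper's proof is a direct four-line application of conditional Jensen: write $\GR = \expect[\STGS \mid D]$, pull the constant $\nabla_\theta$ inside the conditional expectation, apply Jensen for the convex map $x \mapsto \|x\|^2$ to get $\|\expect[\STGS - \nabla_\theta \mid D]\|^2 \leq \expect[\|\STGS - \nabla_\theta\|^2 \mid D]$, and then take outer expectations via the tower rule. No bias--variance split is made explicit.

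Your approach instead decomposes the MSE into squared bias plus trace-variance, uses the tower rule to equate the biases, and then invokes the (scalar) law of total variance to compare the variance terms. This is slightly longer but has the expository advantage of isolating exactly what the paper asserts in prose just after the proposition: that GR has the same mean as ST-GS and strictly lower variance, so the MSE improvement comes entirely from the variance term. The paper's Jensen route is more compact; your route makes the ``same bias, reduced variance'' structure of Rao--Blackwellization visible in the algebra.
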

\begin{proof}
The proposition follows from Jensen's inequality and the linearity of expectations, see \ref{appendix:proofs_gr_mse}.
\end{proof}

While GR is only guaranteed to reduce the variance of ST-GS, Proposition \ref{prop:gr_mse} guarantees that, as a function of $\tau$, the MSE of GR is a pointwise lower bound on ST-GS. This means GR can be used for estimation at lower temperatures, where ST-GS has high variance and low bias. Empirically, we observe that our estimator indeed facilitates training at lower temperatures and thus results in an estimator that improves in both bias and variance over ST-GS. Thus, this estimator retains the favourable properties of the ST-GS (single, unmodified evaluation of $f$) while improving its performance.

\vspacesubsectiontop
\subsection{Monte Carlo Approximation}
\vspacesubsectionbottom

The GR estimator requires computing the expected value of the Jacobian of the tempered softmax over the distribution $\theta + G | D$. Unfortunately, an analytical expression for this is only available in the simplest cases.\footnote{For example, in the case of $n=2$ (binary) and $\tau=1$ an analytical expression for the GR estimator is available.} In this section we provide a simple Monte Carlo (MC) estimator with sample size $K$ for $\mathbb{E}[d S_{\tau} / d \theta | D]$, which we call the \textit{Gumbel-Rao Monte Carlo Estimator (GR-MC$K$)}. This estimator can be computed locally at a cost that only scales like $nK$ (the arity of $D$ times $K$).

% Monte-Carlo integration
They key property exploited by GR-MC$K$ is that $\theta + G | D$ can be reparameterized in the following closed form. Given a realization of $D$ such that $D_i = 1$, $Z(\theta) = \sum_{i=1}^n \exp(\theta_i)$, and $E_j \sim \exponential$ i.i.d., we have the following equivalence in distribution \citep{maddison2014astar, maddison2016ppmontecarlo, tucker2017rebar}.
\begin{align}
\label{eq:gumbel_reparam}
	\theta_j + G_j | D \overset{d}{=} 
	\begin{cases}
-\log\left(E_j\right) + \log Z(\theta) & \text{if } j = i \\
-\log\left(\frac{E_j}{\exp(\theta_j)} + \frac{E_i}{Z(\theta)}\right) & \text{o.w.}
\end{cases}
\end{align}
With this in mind, we define the GR-MC$K$ estimator:
\begin{align}
\label{eq:def_grmc}
\GRMC{K} := \frac{\partial  f(D)}{\partial D} \left[ \frac{1}{K}\sum_{k=1}^K \frac{d \softmax_{\tau}(\theta + G^k)}{d \theta}\right],
\end{align}
where $G^k \sim \theta + G | D$ i.i.d. using the reparameterization \eqref{eq:gumbel_reparam}. Note that the total derivative $d \softmax_{\tau}(\theta + G^k)/d \theta$ is taken through both $\theta$ and $G^k$. For the case $K=1$, our estimator reduces to the standard ST-GS estimator. The cost for drawing multiple samples $G^k \sim \theta + G | D$ scales only {\em linearly} in the arity of $D$ and is usually negligible in modern applications, where the bulk of computation accrues from the computation of $f$. Moreover, drawing multiple samples of $\theta + G | D$ can easily be parallelised on modern workstations (GPUs, etc.). Our estimator remains a single-evaluation estimator under this scheme, because the loss function $f$ is still only evaluated at $D$. Finally, as with GR, the GR-MC$K$ is guaranteed to improve in MSE over ST-GS for any $K \geq 1$, as confirmed in Proposition \ref{prop:grmc_mse}.

% Prop GRMC_MSE
\begin{prop}
\label{prop:grmc_mse}
Let $\STGS$ and $\GRMC{K}$ be the estimators defined in \eqref{eq:def_stgs} and \eqref{eq:def_grmc}. Let $\nabla_{\theta} := d \mathbb{E}[f(D)]/d\theta$ be the true gradient that we are trying to estimate. For all $K \geq 1$, we have
\begin{align}
\label{eq:grmc_mse}
\expect
\left[
\left\lVert \GRMC{K} - \nabla_{\theta} \right\rVert^2
\right] 
\leq 
\expect
\left[
\left\lVert \STGS - \nabla_{\theta} \right\rVert^2
\right].
\end{align}
\end{prop}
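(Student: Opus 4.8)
The plan is to reduce the mean-squared-error comparison to a variance comparison and then exploit the i.i.d.\ averaging structure of $\GRMC{K}$ via the law of total variance. First I would record that all three estimators share a common mean. Since each $G^k$ is drawn from $\theta + G \mid D$, linearity of conditional expectation gives $\expect[\GRMC{K}\mid D] = \GR = \expect[\STGS \mid D]$, and the tower rule then yields $\expect[\GRMC{K}] = \expect[\GR] = \expect[\STGS]$. Writing the MSE of any (vector-valued) estimator $\hat g$ in the standard bias--variance form $\expect[\lVert \hat g - \nabla_\theta\rVert^2] = \lVert \expect[\hat g] - \nabla_\theta \rVert^2 + \expect[\lVert \hat g - \expect[\hat g]\rVert^2]$, the squared-bias term is identical for $\STGS$ and $\GRMC{K}$. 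Hence it suffices to show that the total variance (the trace of the covariance, i.e.\ the second term) of $\GRMC{K}$ does not exceed that of $\STGS$.

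Next I would apply the law of total variance conditioning on $D$. Abbreviate the fixed (given $D$) row vector $a_D := \partial f(D)/\partial D$ and the random Jacobian $U^k := d\softmax_\tau(\theta + G^k)/d\theta$, so that $\GRMC{K} = a_D \cdot \frac{1}{K}\sum_{k=1}^K U^k$. The ``between'' component $\expect[\lVert \expect[\GRMC{K}\mid D] - \expect[\GR]\rVert^2]$ equals the variance of $\GR$ and is therefore independent of $K$. For the ``within'' component, note that given $D$ the terms $a_D U^1, \dots, a_D U^K$ are i.i.d., each with the same conditional law as $\STGS$ (by the reparameterization of $\theta + G \mid D$ in \eqref{eq:gumbel_reparam}). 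Averaging $K$ i.i.d.\ vectors scales the conditional covariance matrix --- and hence its trace --- by $1/K$, so the conditional variance of $\GRMC{K}$ given $D$ equals $\tfrac1K$ times the conditional variance of $\STGS$ given $D$.

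Combining these two observations, and recognising that $\STGS = \GRMC{1}$, I would obtain
\begin{align*}
\expect[\lVert \STGS - \expect[\STGS]\rVert^2] - \expect[\lVert \GRMC{K} - \expect[\GRMC{K}]\rVert^2] = \left(1 - \tfrac1K\right)\, \expect_D\!\left[\mathrm{Var}\!\left(\STGS \mid D\right)\right],
\end{align*}
where $\mathrm{Var}(\,\cdot \mid D)$ denotes the conditional total variance. The right-hand side is non-negative for every $K \ge 1$ because conditional variances are non-negative, which gives the claimed inequality; note that $K=1$ recovers equality, consistent with $\GRMC{1} = \STGS$.

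The main obstacle is bookkeeping around the vector-valued nature of the estimators: I must verify that the law of total variance and the $1/K$ i.i.d.-averaging reduction hold at the level of the full covariance matrix (so that they transfer to the total variance, i.e.\ its trace), rather than only in the scalar case. The one genuinely substantive ingredient is matching conditional laws --- confirming that each $a_D U^k$, built from $G^k \sim \theta + G\mid D$, has exactly the same conditional distribution as $\STGS \mid D$ --- which is precisely what the closed-form reparameterization \eqref{eq:gumbel_reparam} guarantees. Throughout I would assume the conditional second moments are finite, matching the integrability assumptions already implicit in Proposition~\ref{prop:gr_mse}.
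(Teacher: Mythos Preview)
Your argument is correct, but it takes a different route from the paper. The paper's proof is a two-line application of Jensen's inequality to the convex map $x \mapsto \lVert x - \nabla_\theta \rVert^2$: writing $\GRMC{K} - \nabla_\theta = \tfrac{1}{K}\sum_{k=1}^K(\STGS(S^k\mid D) - \nabla_\theta)$, convexity gives $\lVert \GRMC{K} - \nabla_\theta\rVert^2 \le \tfrac{1}{K}\sum_k \lVert \STGS(S^k\mid D) - \nabla_\theta\rVert^2$ pointwise, and taking expectations plus identical distribution of the summands finishes. No bias--variance split, no conditioning on $D$, no law of total variance. Your approach instead reduces to a variance comparison via the shared-mean observation and then decomposes the total variance into between-$D$ and within-$D$ pieces, using i.i.d.\ averaging to scale the latter by $1/K$. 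This is longer but buys you something the paper's proof does not: an exact expression for the MSE gap, $(1-\tfrac{1}{K})\,\expect_D[\var(\STGS\mid D)]$, which is essentially the content of the paper's separate Proposition~\ref{prop:grmc_var} specialized to $B=1$. So your proof is less economical for this statement alone, but it unifies Propositions~\ref{prop:grmc_mse} and~\ref{prop:grmc_var} into a single computation.
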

\begin{proof}
	The proposition follows from Jensen's inequality and the linearity of expectations, see \ref{appendix:proofs_grmc_mse}.
\end{proof}

% Variance Reduction
\subsection{Variance Reduction in Minibatches}
\label{gumbel_rao:mb}
The variance of GR-MC$K$ can be reduced by increasing $K$ or by averaging $B$ i.i.d. samples of the GR-MC$K$ estimator. An average of i.i.d. samples $\GRMC{K}^b$ for $b \in \{1, \ldots, B\}$ is an generalization of minibatching by sampling data points with replacement. In this subsection, we consider the effect of increasing $K$ and $B$ separately. 

\newcommand{\GRMCMB}[2]{\overline{\nabla}_{\mathrm{GRMC}#1}^{1:#2}}

Let $\GRMC{K}^b$ be i.i.d. as $\GRMC{K}$ for $b \in \{1, \ldots, B\}$ and define the following ``minibatched'' GR-MC$K$ estimator:
\begin{align}
\label{eq:grmck_minibatched}
\GRMCMB{K}{B} := \frac{1}{B}\sum_{b=1}^B\GRMC{K}^b.
\end{align}
Proposition \ref{prop:grmc_var} summarizes the scaling of the variance of \eqref{eq:grmck_minibatched}, and is an elementary application of the law of total variance.
\begin{prop}
\label{prop:grmc_var}
Let $\STGS$, $\GR$ and $\GRMCMB{K}{B}$ be the estimators defined in \eqref{eq:def_stgs}, \eqref{eq:def_stgr} and \eqref{eq:grmck_minibatched}. We have
\begin{align}
\label{eq:grmc_var}
\var\left[\GRMCMB{K}{B}\right]
&= 
\frac{\expect
\left[
\var
\left[
\STGS | D 
\right]
\right]}{BK}
+
\frac{\var
\left[
\GR
\right]}{B}
\end{align}
where $\var$ is the trace of the covariance matrix.
\end{prop}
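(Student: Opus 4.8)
The plan is to peel off the two averaging operations one at a time: use independence for the outer average over $b$, and the law of total variance (in its trace form) for the inner average over $k$. Throughout I would keep in mind that $\var$ here denotes the trace of the covariance matrix, i.e.\ the sum over coordinates of the scalar coordinate-variances, so that each standard variance identity should be justified coordinatewise and then summed rather than invoked as a scalar black box.

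First I would exploit that the $\GRMC{K}^b$ are i.i.d.\ across $b \in \{1,\dots,B\}$. For each coordinate, the variance of an average of $B$ i.i.d.\ terms is $1/B$ times the individual coordinate-variance, and summing over coordinates preserves this, so $\var[\GRMCMB{K}{B}] = \tfrac{1}{B}\var[\GRMC{K}]$. This reduces the problem to computing $\var[\GRMC{K}]$.

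Next I would apply the law of total variance conditioning on $D$. The law of total covariance holds matrix-wise, and taking the trace (a linear functional) transports it to the trace-of-covariance notation used here, giving $\var[\GRMC{K}] = \expect[\var[\GRMC{K} \mid D]] + \var[\expect[\GRMC{K} \mid D]]$. For the between-group term, I would note that $\expect[\GRMC{K} \mid D] = \GR$: since $\tfrac{\partial f(D)}{\partial D}$ is fixed given $D$ and each $G^k \sim \theta + G \mid D$, linearity of conditional expectation collapses the Monte Carlo average to $\expect[d\softmax_{\tau}(\theta+G)/d\theta \mid D]$, which is exactly the bracketed factor in the definition of $\GR$. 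Hence this term equals $\var[\GR]$. For the within-group term, the key observation is that, conditioned on $D$, the $K$ summands $\tfrac{\partial f(D)}{\partial D}\tfrac{d\softmax_{\tau}(\theta + G^k)}{d\theta}$ are i.i.d.\ and each is distributed exactly as $\STGS \mid D$ (they are the defining expression \eqref{eq:def_stgs} evaluated at a conditional draw $G^k \sim \theta+G\mid D$). Therefore the conditional variance of their average is $\tfrac{1}{K}\var[\STGS \mid D]$, and taking the outer expectation gives $\expect[\var[\GRMC{K} \mid D]] = \tfrac{1}{K}\expect[\var[\STGS \mid D]]$. Combining yields $\var[\GRMC{K}] = \tfrac{1}{K}\expect[\var[\STGS \mid D]] + \var[\GR]$, and substituting into the $1/B$ reduction from the first step gives the claimed identity.

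I do not anticipate a genuine obstacle, since the result is an identity rather than an inequality. The only points requiring care are the two just flagged: that every scaling and total-variance step must be read as a coordinatewise statement summed via the trace, and that the conditional-i.i.d.\ structure of the $G^k$ given $D$ is what simultaneously licenses the $1/K$ within-group scaling and the identification of each summand's conditional law with that of $\STGS \mid D$. Both are straightforward to state cleanly, so the bookkeeping is the entire content of the proof.
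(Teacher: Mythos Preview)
Your proposal is correct and follows essentially the same approach as the paper: both arguments rest on the law of total variance conditioning on $D$, the conditional i.i.d.\ structure of the $G^k$ to extract the $1/K$ factor, and the identification $\expect[\GRMC{K}\mid D]=\GR$. The only cosmetic difference is ordering: you peel off the $1/B$ average first and then apply total variance to a single $\GRMC{K}$, whereas the paper applies total variance directly to $\GRMCMB{K}{B}$ (conditioning on all $B$ discrete draws) and reduces both terms in parallel; the two routes are trivially equivalent.
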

\begin{proof}
	The proposition follows directly from the law of total variance, see \ref{appendix:proofs_grmc_var}.
\end{proof}
As expected the total variance of $\GRMCMB{K}{B}$ decreases like $1/B$. The key point of Proposition \ref{prop:grmc_var} is that the component of the variance that $K$ reduces can also be reduced by increasing the batch size $B$. This suggests that the effect of GR-MC$K$ will be most pronounced at small batch sizes. Proposition \ref{prop:grmc_var} also indicates that there are diminishing returns to increasing $K$ for a fixed batch size $B$, such that the variance of GR-MC$K$ will eventually be dominated by the right-hand term of \eqref{eq:grmc_var}. In our experimental section, we explore various $K$ and study the effect on gradient estimation in more detail. 

% Other methods
\looseness -1 Finally, we note that the choice of a Monte Carlo scheme to approximate $\expect\left[d S_{\tau}/d \theta|D\right]$ permits the use of additional well-known variance reduction methods to improve the estimation properties of our gradient estimator. For example, antithetic variates or importance sampling are sensible methods to explore in this setting \citep{kroese2013handbook}. For low-dimensional discrete random variables, Gaussian quadrature or other numerical methods could be employed. However, we found the simple Monte Carlo scheme described above effective in practice and report results based on this procedure in the experimental section.

\vspacesectiontop
\section{Related Work}
\vspacesectionbottom
The idea of using Rao-Blackwellization to reduce the variance of gradient estimators for discrete latent variable models has been explored in machine learning. For example, \citet{liu2018rao} describe a sum-and-sample style estimator that analytically computes part of the expectation to reduce the variance of the gradient estimates. The favorable properties of their estimator are due to the Rao-Blackwell theorem. \citet{Kool2020Estimating} describe a gradient estimator based on sampling without replacement. Their estimator emerges naturally as the Rao-Blackwell estimator of  the importance-weighted estimator \citep{vieira2017estimating} and the estimator described by \citet{liu2018rao}. Both of these estimators rely on {\em multiple} function evaluations to compute a gradient estimate. In contrast, our work is the first to consider Rao-Blackwellisation in the context of a {\em single-evaluation} estimator. 
\vspacesectiontop
\section{Experiments}
\label{sec:exp}
\vspacesectionbottom
% Toy Plot

\subsection{Protocol}
In this section, we study the effectiveness of our gradient estimator in practice. In particular, we evaluate its performance with respect to the temperature $\tau$, the number of MC samples $K$ and the batch size $B$. We measure the variance reduction and improvements in MSE our estimator achieves in practice, and assess whether its lower variance gradient estimates accelerate the convergence on the objective or improve final test set performance. Our focus is on single-evaluation gradient estimation and we compare against other non-relaxing estimators (ST, ST-GS and REINFORCE with a running mean as a baseline) and relaxing estimators (GS), where permissible. Experimental details are given in Appendix \ref{appendix:exp_details}.

First, we consider a toy example which allows us to explore and visualize the variance of our estimator and suggests that it is particularly effective at low temperatures. Next, we evaluate the effect of $\tau$ and $K$ in a latent parse tree task which does not permit the use of relaxed gradient estimators. Here, our estimator facilitates training at low temperatures to improve overall performance and is effective even with few MC samples. Finally, we train variational auto-encoders with discrete latent variables \citep{kingma2013vae, rezende2014stochastic}. Our estimator yields improvements at small batch sizes and obtains competitive or better performance than the GS estimator at the largest arity.

%% Toy SECTION
\subsection{Quadratic Programming on the Simplex}

\vspacefiguretop
\begin{figure}[t]
	\centering
	\vspacesubfiguretop
	\begin{subfigure}[t]{0.22\textwidth}
		\centering
		\includegraphics[width=\textwidth]{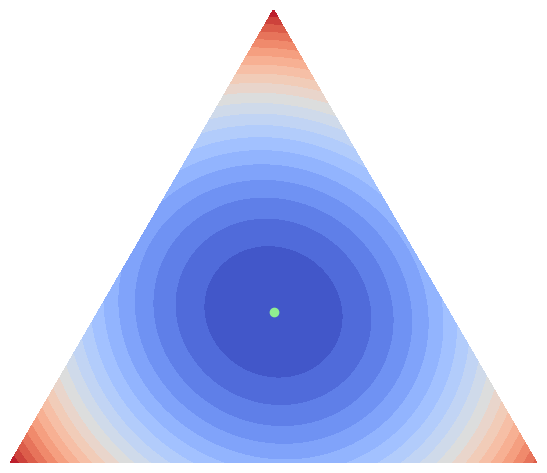}
		\vspacesubcaptiontop
		\caption{Objective function}
		\vspacesubcaptionbottom
		\label{fig:toy:qp}
	\end{subfigure}
	\vspacesubfigurebottom
	\hfill
	\vspacesubfiguretop
	\begin{subfigure}[t]{0.22\textwidth}
		\centering
		\includegraphics[width=\textwidth]{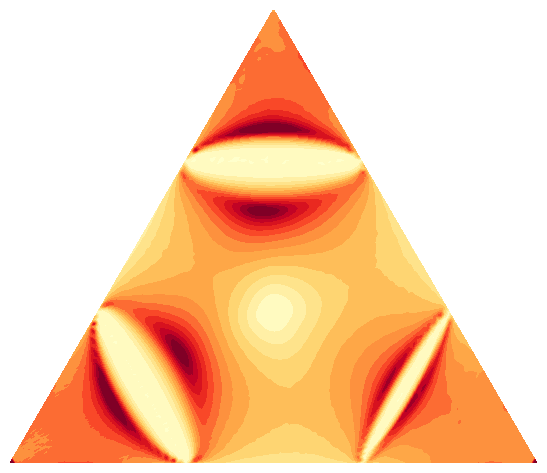}
		\vspacesubcaptiontop		
		\caption{$\Delta \log_{10}(\var)$ ($\tau$=0.1)}	
		\vspacesubcaptionbottom
		\label{fig:toy:tau01}		
	\end{subfigure}
	\vspacesubfigurebottom	
	\hfill
	\vspacesubfiguretop
	\begin{subfigure}[t]{0.22\textwidth}
		\centering
		\includegraphics[width=\textwidth]{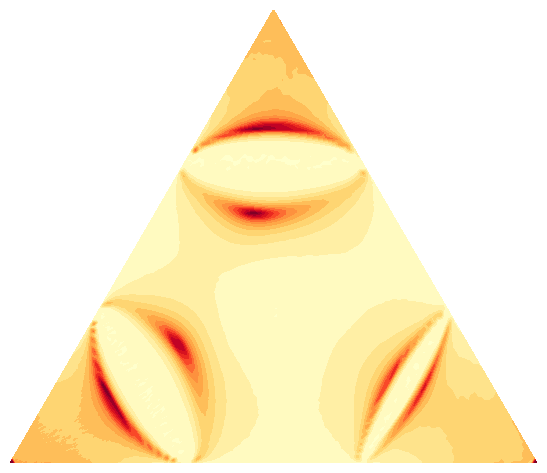}
		\vspacesubcaptiontop
		\caption{$\Delta \log_{10}(\var)$ ($\tau$=0.5)}		
		\vspacesubcaptionbottom		
		\label{fig:toy:tau05}				
	\end{subfigure}	
	\vspacesubfigurebottom
	\hfill
	\vspacesubfiguretop
	\begin{subfigure}[t]{0.22\textwidth}
		\centering
		\includegraphics[width=\textwidth]{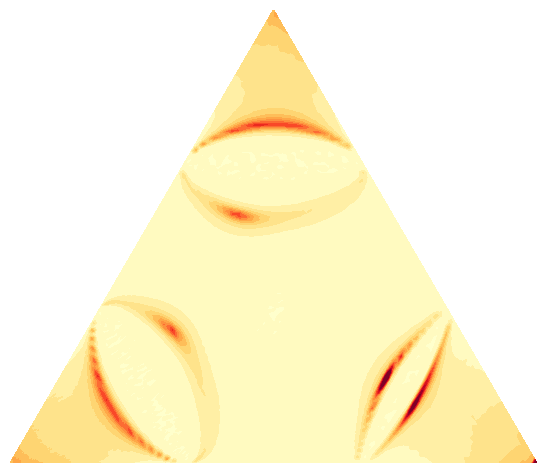}
		\vspacesubcaptiontop
		\caption{$\Delta \log_{10}(\var)$ ($\tau$=1.0)}			
		\vspacesubcaptionbottom
		\label{fig:toy:tau1}				
	\end{subfigure}
	\vspacesubfigurebottom	
	\hfill
	\vspacesubfiguretop
	\begin{subfigure}[t]{0.04\textwidth}
		\centering
		\includegraphics[scale=0.26]{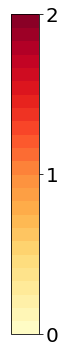}
	\end{subfigure}	
	\vspacesubfigurebottom	
	\vspacefigcaptiontop
	\caption{Our estimator (GR-MC$K$) effectively reduces the variance over the entire simplex and is particularly effective at low temperatures. Contours for the quadratic programme in three dimensions (\ref{fig:toy:qp}) and  difference in log10-trace of the covariance  matrix between ST-GS and GR-MC$1000$ at different temperatures (\ref{fig:toy:tau01},  \ref{fig:toy:tau05},  \ref{fig:toy:tau1}). Warmer means difference is larger.}
	\vspacefigcaptionbottom
	\label{fig:toy}
\end{figure}

% Toy Intro Paragraph
As a toy problem, we consider the problem of minimizing a quadratic program $(p-c)^\intercal Q (p-c)$ over the probability simplex $\Delta^{n-1} = \{ p \in R^n : p_i \geq 0, \sum_{i=1}^n p_i = 1\}$ for $Q \in R^{n \times n}$ positive-definite and $c	\in R^n$. This problem may be reframed as the following stochastic optimization problem,
\begin{align*}
	\min_{p \in \Delta^{n-1}} \mathbb{E}[ (D-c)^\intercal A(p) (D-c) ],
\end{align*}
where $D \sim \text{Discrete}(p)$ and $A_{ii}(p)=\frac{(p_i - c_i)^2}{p_i - 2 p_i c_i + c_i^2} Q_{ii}$ and $A_{ij}(p)=\frac{(p_i - c_i)(p_j - c_j)}{c_i c_j - p_i c_j - c_i p_j} Q_{ij}$ for $i \neq j$. 
While solving the above problem is simple using standard methods, it provides a useful testbed to evaluate the effectiveness of our variance reduction scheme. For this purpose, we consider $Q_{ij} = \exp\left(-2|i-j|\right)$  and $c_i = \frac{1}{3}$ in three dimensions. 

% Toy Comment Paragraph
Our estimator reduces the variance in the gradient estimation over the entire simplex and is particularly effective at low temperatures in this problem. In Figure \ref{fig:toy}, we compare the log10-trace of the covariance matrix of ST-GS and GR-MC1000 at three different temperatures and display their difference over the entire domain. The improvement is universal. The pattern is not always intuitive (oval bull's eyes), despite the simplicity of the objective function. Compared with ST-GS, our estimator on this example appears more effective closer to the corners and edges, which is important for learning discrete distributions. At lower temperatures, the difference between the two estimators becomes particularly acute. This suggests that our estimator may train better at lower temperatures and be more responsive to optimizing over the temperature to successfully trade off bias and variance.

%% NLP SECTION
\subsection{Unsupervised Parsing on ListOps}

% Intro
Straight-through estimators feature prominently in NLP \citep{martins-etal-2019-latent} where latent discrete structure arises naturally, but the use of relaxations is often infeasible. Therefore, we evaluate our estimator in a latent parse tree task on subsets of the ListOps dataset \citep{nangia2018listops}. This dataset contains sequences of prefix arithmetic expressions $x$ (e.g., \texttt{max[ 3 min[ 8 2 ]]}) that evaluate to an integer $y \in \{0, 1, \ldots 9\}$. The arithmetic syntax induces a latent parse tree $T$. We consider the model by \citep{choi2017unsupervised} that learns a distribution over plausible parse trees of a given sequence to maximize 
\begin{align*}
	\expect_{q_{\theta}(T|x)}
	\left[
	\log 
	p_{\phi}(y|T, x)
	\right].
\end{align*}
Both the conditional distribution over parse trees $q_{\theta}(T|x)$ and the classifier $p_{\phi}(y|T, x)$ are parameterized using neural networks. In this model, a parse tree $T \sim q_{\theta}(T | x)$ for a given sentence is sampled bottom-up by successively combining the embeddings of two tokens that appear in a given sequence until a single embedding for the entire sequence remains. This is then used for performing the subsequent classification. Because it is computationally infeasible to marginalize over all trees, \citet{choi2017unsupervised} rely on the ST-GS estimator for training. We compare this estimator against our estimator GR-MC$K$ with $K \in \{10, 100, 1000\}$. We consider temperatures $\tau \in \{0.01, 0.1, 1.0\}$ and experiment with shallow and deeper trees by considering sequences of length $L$ up to 10, 25 and 50. All models are trained with stochastic gradient descent with a batch size equal to the maximum $L$. Details are in Appendix \ref{appendix:exp_nlp}.

Our estimator facilitates training at lower temperatures and achieves  better final test set accuracy than ST-GS (Table \ref{table:listops:results}). Increasing $K$ improves the performance at low temperatures, where the differences between the estimators are most pronounced. Overall, across all temperatures this results in modest improvements, particularly for shallow trees and small batch sizes. We also find evidence for diminishing returns: The differences between ST-GS and GR-MC$10$ are larger than between GR-MC$100$ or GR-MC$1000$, suggesting that our estimator is effective even with few MC samples.

\begin{table*}[t]
\vspacetabletop
\ra{1.2}
\vspacetablecaptiontop
\caption{Our estimator (GR-MC$K$) facilitates training at lower temperatures with improved performance on the latent parse tree task. Best test classification accuracy on the ListOps dataset selected on the validation set. Best estimator at given temperature in bold, best estimator across temperatures in italics. Higher is better.}
\vspacetablecaptionbottom
\label{table:listops:results}
\begin{center}
\begin{adjustbox}{width=1\textwidth}
\begin{small}	
\begin{sc}
\begin{tabular}{@{}lcrrrrrrrrr@{}}
\toprule
	&& \multicolumn{3}{c}{$L \leq 10$} &  \multicolumn{3}{c}{$L \leq 25$} &  \multicolumn{3}{c}{$L \leq 50$} \\
\cmidrule(lr){3-5} \cmidrule(lr){6-8} \cmidrule(lr){9-11} 
Estimator && $\tau=0.01$ & $\tau=0.1$ &  $\tau=1.0$ & $\tau=0.01$ &  $\tau=0.1$ & $\tau=1.0$ & $\tau=0.01$ & $\tau=0.1$ & $\tau=1.0$ \\
\midrule
ST-GS && 38.8 & 59.3 & 65.8 & 41.2 & 57.1 & 60.2 & 46.8 & 56.8 & 59.6 \\
GR-MC10 &&  66.4 & 66.9 & 66.7 & 60.7 & 60.8 & 60.9 & 58.7  & 59.1 & 59.6 \\
GR-MC100 && {65.6} & {{66.3}} & {65.9} & 60.0 & \emph{\textbf{61.3}} & \textbf{61.2} & 59.6 & 59.1  & {59.6} \\
GR-MC1000 &&  \textbf{66.5} & \emph{\textbf{67.1}} & \textbf{67.0} & \textbf{60.2} & 60.9 & \textbf{61.2} & \emph{\textbf{60.0}} & \textbf{59.8} & \textbf{59.9} \\
\bottomrule
\end{tabular}
\end{sc}
\end{small}
\end{adjustbox}
\end{center}
\vspacetablebottom
\end{table*}

 % ?
% VAE Variance Plot
\vspacefiguretop
\begin{figure*}[b]
	\centering
	\vspacesubfiguretop	
	\begin{subfigure}[t]{0.32\textwidth}
		\centering
		\includegraphics[width=\textwidth]{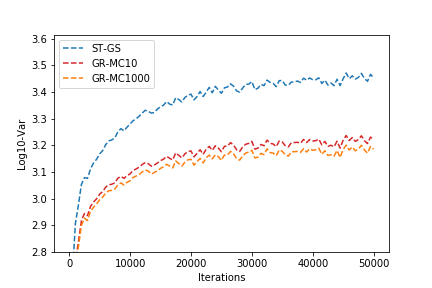}
		\vspacesubcaptiontop				
		\caption{$\log_{10}(\var)$ vs Iterations}
		\vspacesubcaptionbottom		
		\label{fig:vae_varmse:var}
	\end{subfigure}
	\vspacesubfigurebottom	
	\hfill
	\vspacesubfiguretop	
	\begin{subfigure}[t]{0.32\textwidth}
		\centering
		\includegraphics[width=\textwidth]{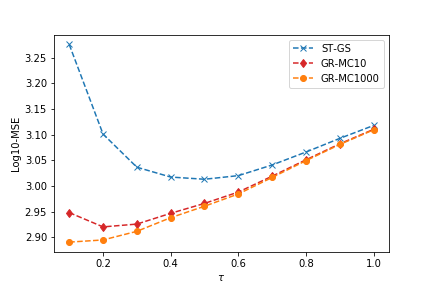}
		\vspacesubcaptiontop				
		\caption{$\log_{10}(\mse)$ vs $\tau$}
		\vspacesubcaptionbottom		
		\label{fig:vae_varmse:mse}		
	\end{subfigure}
	\vspacesubfigurebottom	
	\hfill
	\vspacesubfiguretop	
	\begin{subfigure}[t]{0.32\textwidth}
		\centering
		\includegraphics[width=\textwidth]{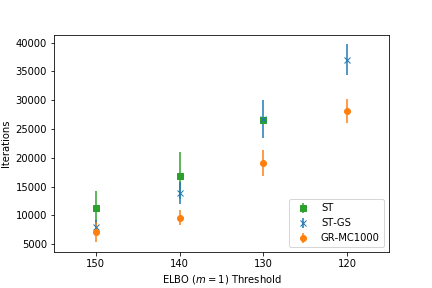}
		\vspacesubcaptiontop				
		\caption{Iterations vs ELBO}
		\vspacesubcaptionbottom				
		\label{fig:vae_varconv_n16}
	\end{subfigure}	
	\vspacesubfigurebottom	
	\label{fig:vae_varmse}
	\vspacefigcaptiontop	
	\caption{Our estimator (GR-MC$K$) effectively reduces the variance over the entire training trajectory (\ref{fig:vae_varmse:var}), achieves a lower mean squared error at a lower temperature (\ref{fig:vae_varmse:mse}) and converges faster than ST and ST-GS on the discrete VAE objective (\ref{fig:vae_varconv_n16}).  Log10-trace of the covariance matrix over a training trajectory (\ref{fig:vae_varmse:var})  and log10-MSE (\ref{fig:vae_varmse:mse}) at different temperatures during training, average number of iterations and standard error to reach various thresholds of the objective on the validation set (\ref{fig:vae_varconv_n16}).}
	\vspacefigcaptionbottom
\end{figure*}
\vspacefiguretop

%% VAE SECTION
\subsection{Generative Modeling with Discrete Variational Auto-Encoders}
Finally, we train variational auto-encoders \citep{kingma2013vae, rezende2014stochastic} with discrete latent random variables on the MNIST dataset of handwritten digits \citep{lecun2010mnist}. We used the fixed binarization of \citep{salakhutdinov2008quantitative} and the standard split into train, validation and test sets. Our objective is to maximize the following variational lower bound on the log-likelihood,
\begin{align*}
	\log p(x) > 
	\expect_{q_{\theta}(D^i|x)}
	\left[
	\log 
	\left(
	\frac{1}{M} 
	\sum_{j=1}^{M} 
	\frac{p_{\phi}(x, D^i)}{ q_{\theta}(D^j|x)}
	\right)
	\right]
\end{align*}
where $x$ denotes the input image and $D^i \sim q_{\theta}(D^i | x)$ denotes a vector of discrete latent random variables. This objective takes a form in equation (\ref{eq:problem}). For training, the bound is approximated using only a single sample ($M=1$). For final validation and testing, we use 5000 samples ($M=5000$). Both the generative model $p_{\phi}(x, D)$ and the variational distributions $q_{\theta}(D|x)$ were parameterized using neural networks. We experiment with different batch sizes and discrete random variables of arities in $\{2, 4, 8, 16\}$ as in \citet{maddison2017concrete}. To facilitate comparisons, we do not alter the total dimension of the latent space and train all models for 50,000 iterations using stochastic gradient descent with momentum. Hyperparameters are optimised for each estimator using random search \citep{bergstra2012random} over twenty independent runs. More details are given in Appendix \ref{appendix:exp_vae}.

% VAE Variance + MSE Paragraph
Our estimator effectively reduces the variance over the entire training trajectory (Figure \ref{fig:vae_varmse:var}). Even a small number of MC samples ($K=10$) results in sizable variance reductions. The variance reduction compares favorably to the magnitude of the mini-batch variance (Appendix \ref{appendix:add_fig}). As a result, our estimator facilitates training at lower temperatures and features a lower MSE (Figure \ref{fig:vae_varmse:mse}). During training our estimator can trade off bias and variance to improve the gradient estimation. Empirically, we observed that on this task, the best models using ST-GS trained at an average temperature of $0.65$, while the best models using GR-MC1000 trained at an average temperature of $0.35$. This is interesting, because it indicates that our estimator may make the use of temperature annealing during training more effective. We find lower variance gradient estimates improve convergence of the objective (Figure \ref{fig:vae_varconv_n16}). GR-MC1000 reaches various performance thresholds on the validation set with reliably fewer iterations than ST or ST-GS. This effect is observable at different arities and persistent over the entire training trajectory.

	For final test set performance, our estimator outperforms ST and REINFORCE (Table \ref{table:vae:results}). The improvements over ST-GS extend up to two nats (for batch size 20, 16-ary) at small batch sizes and are more modest at large batch sizes as expected (also see Appendix \ref{appendix:add_fig}). This confirms that our estimator might be particularly effective in settings, where training at high batch sizes is prohibitively expensive. The improvements from increasing the number of MC samples tend to saturate at $K=100$ on this task. Further, our results suggest that relaxed estimators may be preferred (if they can be used) for discrete random variables of smaller arity. For example, the GS estimator outperforms all straight-through estimators for binary variables for both batch sizes. For large arities however, we find that straight-through estimators can perform competitively: Our estimator GR-MC1000 achieves the best performance overall and outperforms the GS estimator for 16-ary variables. 
	
% VAE Table
\begin{table*}[t]
\vspacetabletop
\ra{1.2}
\vspacetablecaptiontop
\caption{Our estimator (GR-MC$K$) outperforms other straight-through estimators for discrete-latent-space VAE objectives on the MNIST dataset and is competitive with the Gumbel-Softmax ($GS$) at large arities. Best bound on the test negative log-likelihood selected on the validation set. Best straight-through estimator in bold, best estimator in italics. Lower is better.}
\vspacetablecaptionbottom
\label{table:vae:results}
\begin{center}
\begin{adjustbox}{width=1\textwidth}
\begin{small}	
\begin{sc}
\begin{tabular}{@{}lcrrrrrrrr@{}}
\toprule
	&& \multicolumn{2}{c}{binary} &  \multicolumn{2}{c}{$4$-ary} &  \multicolumn{2}{c}{$8$-ary} &  \multicolumn{2}{c}{$16$-ary}\\
\cmidrule(lr){3-4} \cmidrule(lr){5-6} \cmidrule(lr){7-8} \cmidrule(lr){9-10}
Estimator && $B=20$ & $B=200$ &  $B=20$ & $B=200$ &  $B=20$ & $B=200$ &  $B=20$ & $B=200$ \\
\midrule
{GS} &&  \emph{{98.2}} &  \emph{{96.4}} & \emph{{95.7}} &  {93.8} & \emph{{95.5}} &  {92.3} & \emph{{96.8}} &  {94.3} \\
\midrule
REINFORCE &&  202.6 & 121.4 & 173.7 & 122.2 & 203.9 & 124.9 & 169.4 & 129.5 \\
ST &&  105.5 & 103.1 & 106.2 & 104.5 & 107.2 & 105.1 & 108.2 & 104.5 \\
ST-GS &&  100.7 & 97.1 & 99.1 & 93.7 & 98.0 & 92.8& 98.8 & 92.6 \\
GR-MC10 &&  100.7 & 97.4 & 97.8 & 93.8 & 97.4 & 93.1 & 97.9 & 92.4 \\
GR-MC100 && 100.6 & \textbf{96.8} & \textbf{97.5} & 94.0	 & 96.8 & \emph{\textbf{92.2}} & 97.3  & 92.4 \\
GR-MC1000 && \textbf{100.5} & 97.0 & 97.6 & \emph{\textbf{93.5}} & \textbf{96.5} & 92.5 & \emph{\textbf{96.8}} & \emph{\textbf{92.2}} \\
\bottomrule
\end{tabular}
\end{sc}
\end{small}
\end{adjustbox}
\end{center}
\vspacetablebottom
\end{table*}
\vspacesectiontop
\section{Conclusion}
\vspacesectionbottom
We introduced the Gumbel-Rao estimator, a new single-evaluation non-relaxing gradient estimator for models with discrete random variables. Our estimator is a Rao-Blackwellization of the state-of-the-art straight-through Gumbel-Softmax estimator. It enjoys lower variance and can be implemented efficiently using Monte Carlo methods. In particular and in contrast to most other work, it does not require additional function evaluations. Empirically, our estimator improved final test set performance in an unsupervised parsing task and on a variational auto-encoder loss. It accelerated convergence on the objective and compared favorably to other standard gradient estimators. Even though the gains were sometimes modest, they were persistent and particularly pronounced when models were trained at low temperatures or with small batch sizes. We expect that our estimator will be most effective in such settings and that further gains may be uncovered when combining our Rao-Blackwellisation scheme with an annealing schedule for the temperature. Finally, we hope that our work inspires further exploration of the use of Rao-Blackwellisation for gradient estimation. 
\section*{Acknowledgements}
MBP gratefully acknowledges support from the Max Planck ETH Center for Learning Systems. CJM is grateful for the support of the James D. Wolfensohn Fund at the Institute of Advanced Studies in Princeton, NJ. Resources used in preparing this research were provided, in part, by the Sustainable Chemical Processes through Catalysis (Suchcat) National Center of Competence in Research (NCCR), the Province of Ontario, the Government of Canada through CIFAR, and companies sponsoring the Vector Institute.

\bibliography{refs}
\bibliographystyle{iclr2021_conference}

\appendix
\newpage
\appendix

\section{Implementing Gradient Estimators by Modifying Backpropagation}

An advantage of the GRMC-K estimator is the ease with which it can be implemented using automatic differentiation software. Here, we provide a pseudo code template for such an implementation.

\begin{lstlisting}
class GRMCK(Function):

	def forward(logits, tau, k):
		sample = sampleOnehotCategorical(logits)
		save_for_backward(sample, logits, tau, k)
		return sample
	
	def backward(grad_output):
		sample, logits, tau, k = self.saved_objects
		logZ = logsumexp(logits)
		maxgumbel = getGumbel(logZ, k)
		tgumbels = getTruncatedGumbel(
			logits, k, sample, maxgumbel)
		gumbels = mergeGumbels(
			maxgumbel, tgumbels, sample)
		J = getSmaxJacobian(gumbels + logits).mean(0)
		return grad_output.matmul(J)
\end{lstlisting}

\section{Implementing Gradient Estimators with the Surrogate Loss Framework}
In this section, we consider an alternative framework for implementing  the gradient estimators presented in the main body. This framework is due to \citep{schulman2015gradient} and known as the surrogate loss framework. The key idea is that after the forward pass through a stochastic computation graph, all sampling decisions have been taken. Therefore, any gradient can be written as resulting from the differentiation of a surrogate objective in a deterministic computation graph. 

Our exposition in the main body only considered a simplified scenario with a single discrete random variable. Therefore, we present here two cases, involving a layer of multiple and a cascade of discrete random variables. These two cases are general, because any case can be reduced to either of these two or a combination of them. 

For ease of exposition, we again do not consider any direct dependence of $f$ on the parameters of interest $\theta$. The extension to this case is straight-forward and follows from basic calculus. 

We also introduce the following notation to denote the stop of gradient flow. For $X^* = \texttt{stop\_gradient}(X)$ indicates that the gradient flow is interrupted at $X$ and no gradient information is passed backward. 

%% Parallel Case
\subsection{Parallel Case}
 Let $D^1, \ldots, D^m$ be a sequence of independent random variables. For $j \leq m$, let $D^j$ be a discrete random variable $D^j \in \{0, 1\}^n$ in a one-hot encoding, $\sum D_i^j = 1$, with distribution given by  $p_{\theta^j}(D^j) \propto \exp({D^j}^T \theta)$ where $\theta^j \in \R^n$. Further, let $S_{\tau}^j$ be defined analogously to equation \eqref{eq:coupling}. Given a continuously differentiable $f : \R^{mn} \to \R$, we wish to minimize
\begin{align}
    \label{eq:parallel} 
    \min_{\theta} 
    \mathbb{E}\left[ 
    f(D^1, \ldots D^m) 
    \right],
\end{align}
where the expectation is taken over all $m$ random variables. 
% REINFORCE
\newline
\newline
In this setting, $\REINFORCE$ can be computed by differentiating the following surrogate objective,
\begin{align}
f(D^{1*}, \ldots D^{m*}) \sum_{j=1}^m \log p_{\theta^j}(D^j)
\end{align}
% Relaxed
\newline
\newline
In this setting, $\GS$ can be computed by differentiating the following surrogate objective,
\begin{align}
f(S_{\tau}^{1}, \ldots S_{\tau}^{m})  
\end{align}
% Straight-Through
\newline
\newline
In this setting, $\ST$ can be computed by differentiating the following surrogate objective,
\begin{align}
\sum_{j=1}^m 
\left(
\frac{
\partial
f(D^{1}, \ldots D^{m})
}
{
\partial 
D^{j}
}
\right)^*
\softmax_{\tau}(\theta^j)
\end{align}
% ST-GS 
\newline
\newline
In this setting, $\STGS$ can be computed by differentiating the following surrogate objective,
\begin{align}
\sum_{j=1}^m 
\left(
\frac{
\partial
f(D^{1}, \ldots D^{m})
}
{
\partial 
D^{j}
}
\right)^*
S_{\tau}^j
\end{align}
% ST-GR 
In this setting, $\GRMC{K}$ can be computed by differentiating the following surrogate objective,
\begin{align}
\sum_{j=1}^m 
\left(
\frac{
\partial
f(D^{1}, \ldots D^{m})
}
{
\partial 
D^{j}
}
\right)^*
\left[\frac{1}{K}
\sum_{k=1}^{K}
S_{\tau}^{jk}\right]
\end{align}

% Sequential Case
\subsection{Sequential Case}
Let $D^1, \ldots, D^m$ be a sequence of non-independent random variables. For $j \leq m$, let $D^j$ be a discrete random variable $D^j \in \{0, 1\}^n$ in a one-hot encoding, $\sum D_i^j = 1$, with distribution given by  $p_{\theta^j}(D^j) \propto \exp({D^j}^T \theta^j)$ where $\theta^j \in \R^n$. For $2 \leq j \leq m$, let $\theta^j = h(D^{j-1})$, where $h : \R^{n} \to \R^{n}$ is a continuously differentiable function. Given a continuously differentiable $f : \R^{mn}\to \R$, we wish to minimize
\begin{align}
\label{eq:sequential} 
\min_{\theta} 
\mathbb{E}\left[ 
f(D^1, \ldots, D^m)
\right]
\end{align}
% Reinforces 
In this setting, $\REINFORCE$ and $\GS$ can be computed by differentiating the surrogate objective given in the parallel case. \\
% % STRAIGHTTHROUGH estimator 
In this setting, $\ST$ can be computed by differentiating the following surrogate objective, 
\begin{align}
L_m &\coloneqq 
\left(\frac{\partial  f(D^1, \ldots, D^m)}{\partial D^m} \right)^*
\softmax_{\tau}(\theta^{m}) \\ 
L_j &\coloneqq 
\left(
\left(\frac{d L_{j+1}(D^1, \ldots, D^m)}{d D^{j}} \right)^*
+ 
\left(\frac{\partial  f(D^1, \ldots, D^m)}{\partial D^j} \right)^*
\right)
\softmax_{\tau}(\theta^{j})
\end{align}
% Gumbel-Softmax Straight-Through
In this setting, $\STGS$ can be computed by differentiating the following surrogate objective, 
\begin{align}
L_m &\coloneqq 
\left(\frac{\partial  f(D^1, \ldots, D^m)}{\partial D^m} \right)^*
\softmax_{\tau}(\theta^{m} + G^m) \\ 
L_j &\coloneqq 
\left(
\left(\frac{d L_{j+1}(D^1, \ldots, D^m)}{d D^{j}} \right)^*
+ 
\left(\frac{\partial  f(D^1, \ldots, D^m)}{\partial D^j} \right)^*
\right)
\softmax_{\tau}(\theta^{j}+ G^j)
\end{align}
% Gumbel-Rao Monte-Carlo
In this setting, $\GRMC{K}$ can be computed by differentiating the following surrogate objective,
\begin{align}
L_m &\coloneqq 
\left(\frac{\partial  f(D^1, \ldots, D^m)}{\partial D^m} \right)^*
\left[\frac{1}{K}
\sum_{k=1}^{K}
\left(
\softmax_{\tau}(\theta^{m} + G^{mk})
\right)\right] \\ 
L_j &\coloneqq 
\left(
\left(\frac{d L_{j+1}(D^1, \ldots, D^m)}{d D^{j}} \right)^*
+ 
\left(\frac{\partial  f(D^1, \ldots, D^m)}{\partial D^j} \right)^*
\right)
\left[\frac{1}{K}
\sum_{k=1}^{K}
\left(
\softmax_{\tau}(\theta^{j}+ G^{jk})
\right)\right]
\end{align}

\section{Proofs for the Propositions}
\label{appendix:proofs}
In this section, we provide derivations for all the propositions given in the main body. 
% Proposition 1
\subsection{Proposition \ref{prop:gr_mse}}
\label{appendix:proofs_gr_mse}
The derivation is based on Jensen's inequality and the law of iterated expectations.
\begin{proof}
\begin{align}
\expect
\left[
\left\lVert \GR - \nabla_{\theta} \right\rVert^2
\right] 
&=
\expect
\left[
\left\lVert 
\expect\left[\STGS | D \right] - \nabla_{\theta} 
\right\rVert^2
\right] \\
&=
\expect
\left[
\left\lVert 
\expect\left[\STGS - \nabla_{\theta} | D \right] 
\right\rVert^2
\right] \\
&\leq
\label{eq:prop1_jensen}
\expect
\left[
\expect
\left[
\lVert 
\STGS - \nabla_{\theta} 
\rVert^2
| D \right] 
\right]  \\ 
&=
\label{eq:prop1_iterated}
\expect
\left[
\lVert 
\STGS - \nabla_{\theta} 
\rVert^2
\right]
\end{align}
\end{proof}
The inequality is strict whenever $\var\left[\STGS | D \right] > 0$, which is the case if $\tau < \infty$ and $\left|\theta_i\right| < \infty$ for all $i \leq n$. 
% Proposition 2
\subsection{Proposition \ref{prop:grmc_mse}}
\label{appendix:proofs_grmc_mse}
The derivation is based on Jensen's inequality and the linearity of expectations. For ease of exposition, denote by $\STGS \left(S^k| D\right)$ a particular realization of the ST-GS estimator for a given $D$. 
\begin{proof}
\begin{align}
\expect
\left[
\left\lVert \GRMC{K} - \nabla_{\theta} \right\rVert^2
\right] 
&=
\expect
\left[
\left\lVert 
\frac{1}{K}
\sum_{k=1}^K
\STGS \left(S^k| D\right) - \nabla_{\theta} 
\right\rVert^2
\right] \\
&\leq
\label{eq:prop2_jensen}
\expect
\left[
\frac{1}{K}
\sum_{k=1}^K
\lVert 
\STGS \left(S^k| D\right) - \nabla_{\theta} 
\rVert^2
\right]  \\ 
&=
\label{eq:prop2_linear}
\frac{1}{K}
\sum_{k=1}^K
\expect
\left[
\lVert 
\STGS \left(S^k| D\right) - \nabla_{\theta} 
\rVert^2
\right]  \\ 
&=
\expect
\left[
\lVert 
\STGS - \nabla_{\theta} 
\rVert^2
\right]	
\end{align}	
\end{proof}
The inequality is strict whenever $K > 1$ and $\var\left[\STGS | D \right] > 0$, which is the case if $\tau < \infty$ and $\left|\theta_i\right| < \infty$ for all $i \leq n$. 
% Proposition 3
\subsection{Proposition \ref{prop:grmc_var}}
\label{appendix:proofs_grmc_var}
The derivation is based on the law of total variance.
\begin{proof}
\begin{align}
\var\left[\GRMCMB{K}{B}\right]
 &= 
	\expect
\left[
\var
\left[
\GRMCMB{K}{B} | D 
\right]
\right]
+ 
\var
\left[
\expect
\left[
\GRMCMB{K}{B} | D 
\right]
\right] \\
 &= 
	\expect
\left[
\var
\left[
\frac{1}{B}\sum_{b=1}^B\GRMC{K}^b \middle| D 
\right]
\right]
+ 
\var
\left[
\expect
\left[
\frac{1}{B}\sum_{b=1}^B\GRMC{K}^b \middle| D 
\right]
\right] \\
 &= 
	\expect
\left[
\frac{1}{B}
\var
\left[
\GRMC{K} | D 
\right]
\right]
+ 
\var
\left[
\frac{1}{B}\sum_{b=1}^B
\expect
\left[
\GRMC{K} | D 
\right]
\right] \\
&= 
\frac{1}{B}
\expect
\left[
\frac{1}{K}
\var
\left[
\STGS | D 
\right]
\right]
+ 
\frac{1}{B}
\var
\left[
\expect
\left[
\GRMC{K} | D 
\right]
\right] \\
&= 
\frac{1}{BK}
\expect
\left[
\var
\left[
\STGS | D 
\right]
\right]
+ 
\frac{1}{B}
\var
\left[
\GR
\right]
\end{align}	
\end{proof}

\section{Experimental Details}
\label{appendix:exp_details}
\subsection{Unsupervised Parsing on ListOps}
\label{appendix:exp_nlp}
For our unsupervised parsing expeiment on ListOps, we use the basic version of the model described in \cite{choi2017unsupervised} with an embedding dimension and hidden dimension of $128$. We do not use the \emph{leaf-rnn}. We do not use the \emph{intra-attention module}. We do not use dropout, but set weight decay to be $1e-4$. Because our interest is in using this experiment primarily as a testbed to evaluate the effectiveness of different gradient estimators for this model at different temperatures and for trees of different depth, we use a very simple experimental set-up. We rely on stochastic gradient descent without momentum to train all models. We use grid search to determine an optimal learning rate from $\{0.1, 0.2, \ldots 1.0\}$ and set the temperature $\tau$ to be in $\{0.01, 0.1, 1.0\}$. We repeat five independent random runs at each setting and report the mean over the five runs. We train for ten epochs and set the batch size to be equal to the maximum sequence length $L$.

\subsection{Generative Modelling with Variational Auto-Encoders}
\label{appendix:exp_vae}
We trained variational auto-encoders with $n$-ary discrete random variables with values on the corners of the hypercube $\{-1, 1\}^{\log_2(n)}$. The model with arity $\{2, 4, 8, 16\}$ included $\{240, 120, 80, 60\}$ random variables respectively.

All models were optimized using stochastic gradient descent with momentum for 50000 steps on minibatches of size 20 and 200 respectively. Hyperparameters were randomly sampled and the best setting was selected from twenty independent runs. Learning rate and momentum were randomly sampled from $\{5, 6, \ldots 50\} \times 10^{-4}$ and $(0,1)$ respectively. We did not anneal the learning rate during training. For regularising the network, we used weight-decay, which was randomly sampled from $\{0, 10^{-1}, 10^{-2} \ldots, 10^{-6}\}$. The temperature was randomly sampled from $[0.1, 1.0]$ and not annealed throughout training. 

All models were evaluated on the validation and test set using the importance-weighted bound on the log-likelihood described in \cite{burda2015importance} with 5000 samples.

To estimate the variance of a gradient estimator in the VAE experiment we used 5000 randomly sampled mini-batches of size 20, for each of which we performed 100 independent forward passes and then computed the associated gradient for the parameters of the inference network. We then summed the variance to get a singe scalar measurement. 

To estimate the bias of a gradient estimator in the VAE experiment, we proceeded as above to approximate the expectation for a gradient estimator. We approximated the true gradient by following this procedure for the REINFORCE algorithm. 

To assess training speed, we measured the average number of iterations needed to achieve a prespecified loss threshold on the validation set. In particular, we ran multiple independent runs under the same experimental conditions for all gradient estimators. Among only runs that achieved the threshold within the total budget, we report the average number of iterations taken to cross the threshold.

\section{Additional Figures}
\label{appendix:add_fig}
\begin{figure*}[!h]
	\centering
	\begin{subfigure}[t]{0.24\textwidth}
		\centering
		\includegraphics[width=\textwidth]{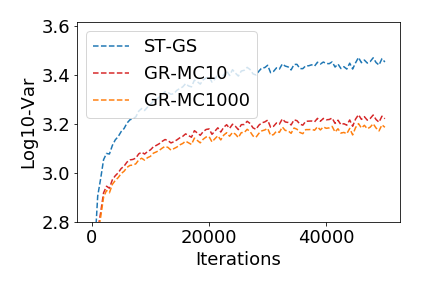}
	\end{subfigure}
	\hfill
	\begin{subfigure}[t]{0.24\textwidth}
		\centering
		\includegraphics[width=\textwidth]{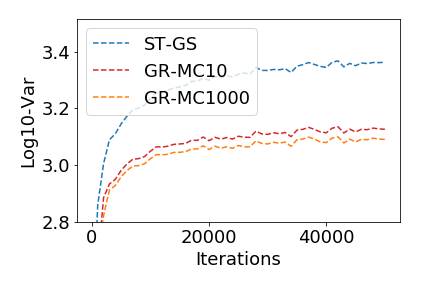}
	\end{subfigure}
	\hfill
	\begin{subfigure}[t]{0.24\textwidth}
		\centering
		\includegraphics[width=\textwidth]{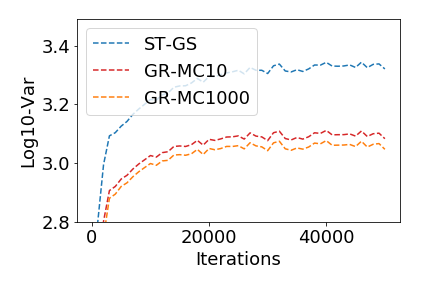}
	\end{subfigure}
	\hfill
	\begin{subfigure}[t]{0.24\textwidth}
		\centering
		\includegraphics[width=\textwidth]{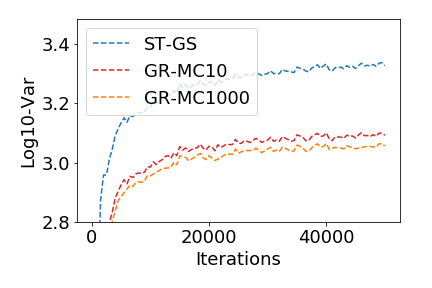}
	\end{subfigure}
	\\
	\begin{subfigure}[t]{0.24\textwidth}
		\centering
		\includegraphics[width=\textwidth]{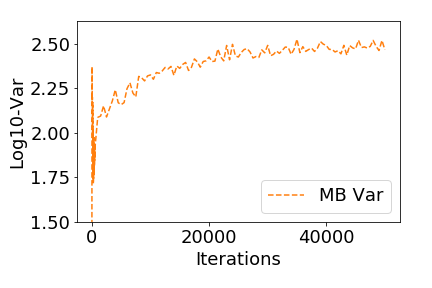}
		\caption{}		
	\end{subfigure}
	\hfill
	\begin{subfigure}[t]{0.24\textwidth}
		\centering
		\includegraphics[width=\textwidth]{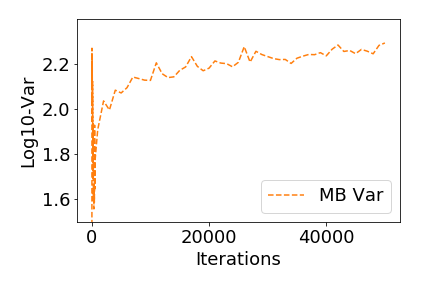}
		\caption{}		
	\end{subfigure}
	\hfill
	\begin{subfigure}[t]{0.24\textwidth}
		\centering
		\includegraphics[width=\textwidth]{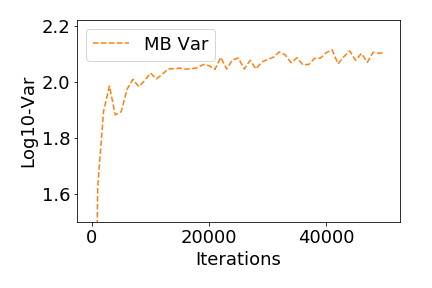}
		\caption{}
	\end{subfigure}
	\hfill	
	\begin{subfigure}[t]{0.24\textwidth}
		\centering
		\includegraphics[width=\textwidth]{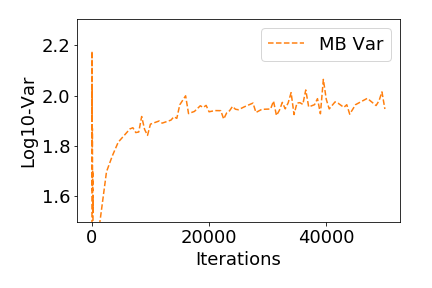}
		\caption{}
	\end{subfigure}
	\caption{Our estimator (GR-MC$K$) effectively reduces the variance over the entire training trajectory at all arities. The variance reduction compares favorable to the minibatch variance. Columns correspond to arities, i.e. (a) binary, (b) 4-ary, (c) 8-ary, (d) 16-ary. First row, log10-trace of MC covariance matrix for various gradient estimators over iterations. Second row, log10-trace of MB covariance matrix over iterations (same for all gradient estimators).}
	\label{fig:vae_additional}
\end{figure*}
\begin{figure*}[!h]
	\centering
	\begin{subfigure}[t]{0.24\textwidth}
		\centering
		\includegraphics[width=\textwidth]{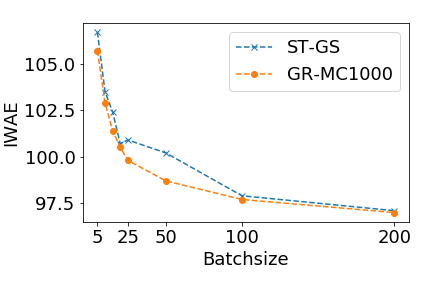}
	\end{subfigure}
	\hfill
	\begin{subfigure}[t]{0.24\textwidth}
		\centering
		\includegraphics[width=\textwidth]{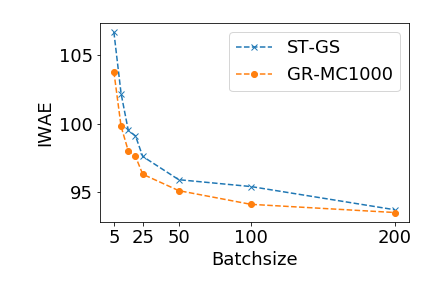}
	\end{subfigure}
	\hfill
	\begin{subfigure}[t]{0.24\textwidth}
		\centering
		\includegraphics[width=\textwidth]{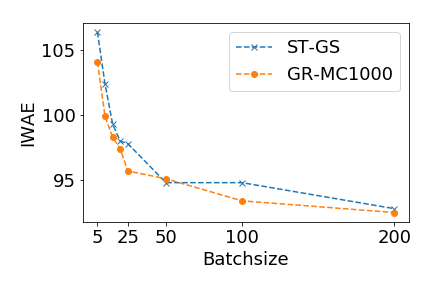}
	\end{subfigure}
	\hfill
	\begin{subfigure}[t]{0.24\textwidth}
		\centering
		\includegraphics[width=\textwidth]{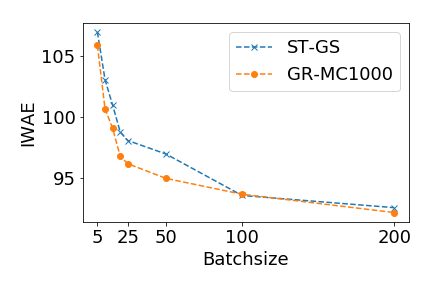}
	\end{subfigure}
	\\
	\begin{subfigure}[t]{0.24\textwidth}
		\centering
		\includegraphics[width=\textwidth]{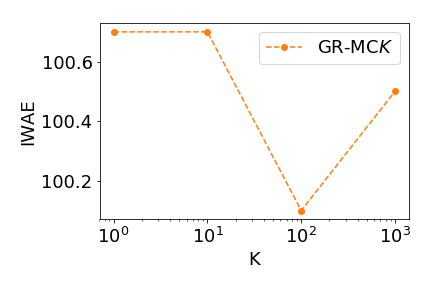}
		\caption{}		
	\end{subfigure}
	\hfill
	\begin{subfigure}[t]{0.24\textwidth}
		\centering
		\includegraphics[width=\textwidth]{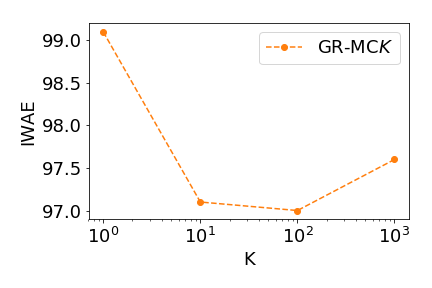}
		\caption{}		
	\end{subfigure}
	\hfill
	\begin{subfigure}[t]{0.24\textwidth}
		\centering
		\includegraphics[width=\textwidth]{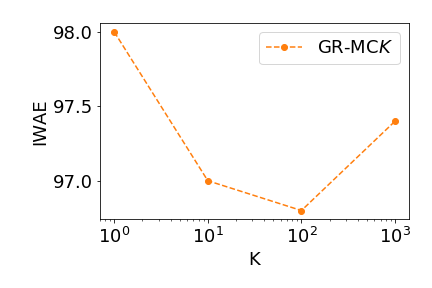}
		\caption{}		
	\end{subfigure}
	\hfill	
	\begin{subfigure}[t]{0.24\textwidth}
		\centering
		\includegraphics[width=\textwidth]{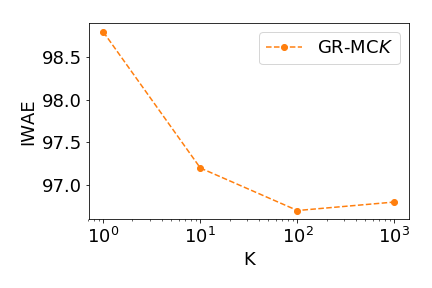}
		\caption{}		
	\end{subfigure}
	\caption{Increasing the number of Monte Carlo samples $K$ to reduce variance in gradient estimation tends to improve performance. The performance difference tends to be larger at smaller batch sizes. Columns correspond to arities, i.e. (a) binary, (b) 4-ary, (c) 8-ary, (d) 16-ary. First row, IWAE on test set for best validated model trained at various batch sizes. Second row, IWAE on test set for best validated model trained at various $K$ at batch size 20.}
\end{figure*}
\begin{table*}[t]
\ra{1.3}
\caption{Our estimator, GR-MC$K$, consistently achieves better performance across arities and batchsizes. The outperformance tends to be larger at smaller batchsizes. Best bound on the negative log-likelihood selected on the validation set from 20 independent runs at randomly searched hyperparameters.}
\label{table:vae:batchsizes}
\begin{center}
\begin{small}
\begin{sc}
\begin{tabular}{@{}llcrrrrrrrr@{}}
\toprule
& && \multicolumn{2}{c}{binary} &  \multicolumn{2}{c}{$4$-ary} &  \multicolumn{2}{c}{$8$-ary} &  \multicolumn{2}{c}{$16$-ary}\\
 \cmidrule(lr){4-5} \cmidrule(lr){6-7} \cmidrule(lr){8-9} \cmidrule(lr){10-11}
& Estimator && Valid. & Test &  Valid. & Test &  Valid. & Test &  Valid. & Test\\
\multirow{2}{*}{\shortstack[l]{batch-\\size 5}}& 
ST-GS && 107.7 & 106.7 & 107.8 & 106.7 & 107.5 & 106.4 & 108.1 & 107.0 \\
& GR-MC1000 && \textbf{106.7} & \textbf{105.7} & \textbf{104.7} & \textbf{103.8} & \textbf{105.1} & \textbf{104.1} & \textbf{107.0} & \textbf{105.9}\\
\midrule
\multirow{2}{*}{\shortstack[l]{batch-\\size 10}}& 
ST-GS && 104.4 & 103.5 & 103.2 & 102.2 & 103.5 & 102.4 & 104.1 & 103.1 \\
& GR-MC1000 && \textbf{103.7} & \textbf{102.9} & \textbf{100.8} & \textbf{99.8} & \textbf{100.9} & \textbf{99.9} & \textbf{101.8} & \textbf{100.7}\\
\midrule
\multirow{2}{*}{\shortstack[l]{batch-\\size 15}}& 
ST-GS && 103.4 & 102.4 & 100.4 & 99.5 & 100.3 & 99.3 & 101.9 & 101.0 \\
& GR-MC1000 && \textbf{102.3} & \textbf{101.4} & \textbf{99.0} & \textbf{98.0} & \textbf{99.2} & \textbf{98.3} & \textbf{100.2} & \textbf{99.1}\\
\midrule
\multirow{2}{*}{\shortstack[l]{batch-\\size 20}}& 
ST-GS && 101.5 & 100.7 & 100.0 & 99.1 & 99.0 & 98.0 & 99.8 & 98.8\\
& GR-MC1000 && \textbf{101.3} & \textbf{100.5} & \textbf{98.4} & \textbf{97.6} & \textbf{97.5} & \textbf{96.5} & \textbf{97.8} & \textbf{96.8}\\
\midrule
\multirow{2}{*}{\shortstack[l]{batch-\\size 25}}& 
ST-GS && 101.7 & 100.9 & 98.6 & 97.6 & 98.8 & 97.8 & 99.0 & 98.1 \\
& GR-MC1000 && \textbf{100.7} & \textbf{99.8} & \textbf{97.2} & \textbf{96.3} & \textbf{96.6} & \textbf{95.7} & \textbf{97.1} & \textbf{96.2}\\
\midrule
\multirow{2}{*}{\shortstack[l]{batch-\\size 50}}& 
ST-GS && 101.2 & 100.2 & 96.7 & 95.9 & \textbf{95.7} & \textbf{94.8} & 98.0 & 97.0 \\
& GR-MC1000 && \textbf{99.5} & \textbf{98.7} & \textbf{96.0} & \textbf{95.1} & 95.9 & 95.1 & \textbf{95.9} & \textbf{95.0}\\
\midrule
\multirow{2}{*}{\shortstack[l]{batch-\\size 100}}& 
ST-GS && 98.8 & 97.9 & 96.3 & 95.4 & 95.7 & 94.8 & \textbf{94.4} & \textbf{93.6} \\
& GR-MC1000 && \textbf{98.5} & \textbf{97.7} & \textbf{95.0} & \textbf{94.1} & \textbf{94.3} & \textbf{93.4} & 94.6 & 93.7 \\
\midrule
\multirow{2}{*}{\shortstack[l]{batch-\\size 200}}& 
ST-GS && 97.9 & 97.1 & 94.5 & 93.7 & 93.6 & 92.8 & 93.4 & 92.6\\
& GR-MC1000 && \textbf{97.8} & \textbf{97.0} & \textbf{94.3} & \textbf{93.5}  & \textbf{93.2} & \textbf{92.5} & \textbf{93.1} & \textbf{92.2}\\
\bottomrule
\end{tabular}
\end{sc}
\end{small}
\end{center}
\end{table*}

\end{document}